\documentclass[10pt]{article}

\usepackage[preprint]{tmlr}

\usepackage{amsmath,amsfonts,bm}

\def\eqref#1{equation~\ref{#1}}

\def\1{\bm{1}}

\DeclareMathAlphabet{\mathsfit}{\encodingdefault}{\sfdefault}{m}{sl}
\SetMathAlphabet{\mathsfit}{bold}{\encodingdefault}{\sfdefault}{bx}{n}

\usepackage{amsmath}
\usepackage{amssymb}
\usepackage{mathtools}
\usepackage{graphicx}
\usepackage{booktabs}
\usepackage{multirow}
\usepackage{subcaption}
\usepackage{xcolor}
\usepackage{hyperref}
\usepackage{url}
\usepackage{amsthm}

\newtheorem{theorem}{Theorem}[section]

\title{RAMP: Recurrent Adaptive Mixed Precision for Efficient Prefill in Recurrent-Depth Language Models}

\author{
\name Ke Wan \\
\addr Department of Computer Science, University of Virginia \\
\email{tbn5pj@virginia.edu}
\AND
\name Chen Chen \\
\addr Department of Computer Science, University of Virginia \\
\email{zrh6du@virginia.edu}
}

\hypersetup{
  pdftitle={RAMP: Recurrent Adaptive Mixed Precision for Efficient Prefill in Recurrent-Depth Language Models},
  pdfauthor={Ke Wan and Chen Chen}
}

\begin{document}

\maketitle
\fancyhead{}
\renewcommand{\headrulewidth}{0pt}

\begin{abstract}
Recurrent-depth language models improve inference-time reasoning by repeatedly applying a shared network to refine internal representations, but the additional recurrent computation also increases inference cost. This overhead is particularly pronounced during long-context prefill, where the full input sequence is repeatedly processed across recurrent depth. Weight-and-activation quantization offers a natural way to accelerate this computation through lower-precision arithmetic, yet repeatedly applying quantized computation introduces numerical perturbations that accumulate along the recurrent trajectory. We systematically analyze how such errors propagate across recurrent depth and show that quantization errors introduced earlier are attenuated by more subsequent recurrent steps and therefore have a smaller effect on the final representation. Matched-budget experiments strongly support this prediction, showing that placing quantized computation earlier in recurrent depth consistently yields the best fidelity. Based on this insight, we introduce \texttt{RAMP} (\textbf{R}ecurrent \textbf{A}daptive \textbf{M}ixed \textbf{P}recision), which performs early recurrent refinement using low-precision weight-and-activation computation and restores high precision for terminal refinement. Across context lengths up to 4K, \texttt{RAMP} preserves near-full-precision prefill quality while achieving up to \textbf{1.16$\times$} end-to-end first-prefill speedup on an NVIDIA H100, corresponding to a \textbf{13.9\%} latency reduction.
\end{abstract}

\section{Introduction}
\label{sec:intro}

Recurrent-depth language models provide a new way to scale inference-time computation without generating additional intermediate tokens: a shared transformation is repeatedly applied to refine internal representations \citep{wan2026attention}. 
Increasing recurrent depth has been shown to improve model performance \citep{geiping2026scaling}, suggesting that additional recurrent computation can translate into stronger reasoning.
However, this benefit comes with increased inference latency.
The cost is especially pronounced during long-context prefill, where each recurrent step repeatedly processes the full input sequence.
This motivates a fundamental question: \emph{can we accelerate recurrent-depth prefill while preserving the performance gains of deeper recurrent computation?}


Quantization provides a natural way to reduce inference cost
\citep{dettmers2022gpt3,xiao2023smoothquant,lin2024awq}.
In particular, weight-and-activation quantization lowers the precision of both model parameters and intermediate activations, enabling faster low-precision arithmetic during prefill
\citep{xiao2023smoothquant,zhao2024atom,lin2025qserve}.
A straightforward approach for recurrent-depth models is therefore to quantize the recurrent computation uniformly throughout prefill. However, repeatedly applying low-precision computation to an evolving representation introduces numerical perturbations at every recurrent step, causing the quantized trajectory to deviate progressively from its full-precision counterpart. A natural alternative is to use mixed precision across recurrent depth, assigning quantized computation to some recurrent steps while preserving the original precision for others. This raises a fundamental question: \emph{how should a fixed precision budget be allocated across recurrent depth to maximize efficiency while preserving the fidelity of full-precision recurrence?}


To understand this tradeoff, we analyze how quantization error propagates through recurrent computation. Our analysis reveals a depth asymmetry in precision requirements: under locally contractive recurrent dynamics, perturbations introduced earlier are attenuated by more subsequent recurrent steps, whereas perturbations introduced near the end retain a larger effect on the final representation. Consequently, under a fixed precision budget, placing low-precision computation early and preserving high precision for later recurrent steps minimizes the theoretical final-error bound. Based on this insight, we introduce \texttt{RAMP} (\textbf{R}ecurrent \textbf{A}daptive \textbf{M}ixed \textbf{P}recision), which performs early recurrent refinement with low-precision computation and reserves high precision for terminal refinement.
Our contributions are summarized as follows:
\begin{itemize}
    \item We theoretically show that, under a fixed precision budget, quantized computation should be placed earlier in recurrent depth while high-precision computation should be preserved for later steps. As shown in Figure~\ref{fig:theory_empirical_validation}, experiments strongly support this prediction.

    \item Based on this analysis, we introduce \texttt{RAMP} (\textbf{R}ecurrent \textbf{A}daptive \textbf{M}ixed \textbf{P}recision), a training-free mixed-precision inference method that performs early recurrent refinement with low-precision weight-and-activation computation and restores the original precision for terminal recurrent refinement.

    \item We show that \texttt{RAMP} preserves near-full-precision prefill quality across context lengths up to 4K while achieving up to \textbf{1.16$\times$ end-to-end first-prefill speedup} on an NVIDIA H100, corresponding to a \textbf{13.9\% reduction in latency}.
\end{itemize}

\begin{figure}[t]
    \centering
    \includegraphics[width=\linewidth]{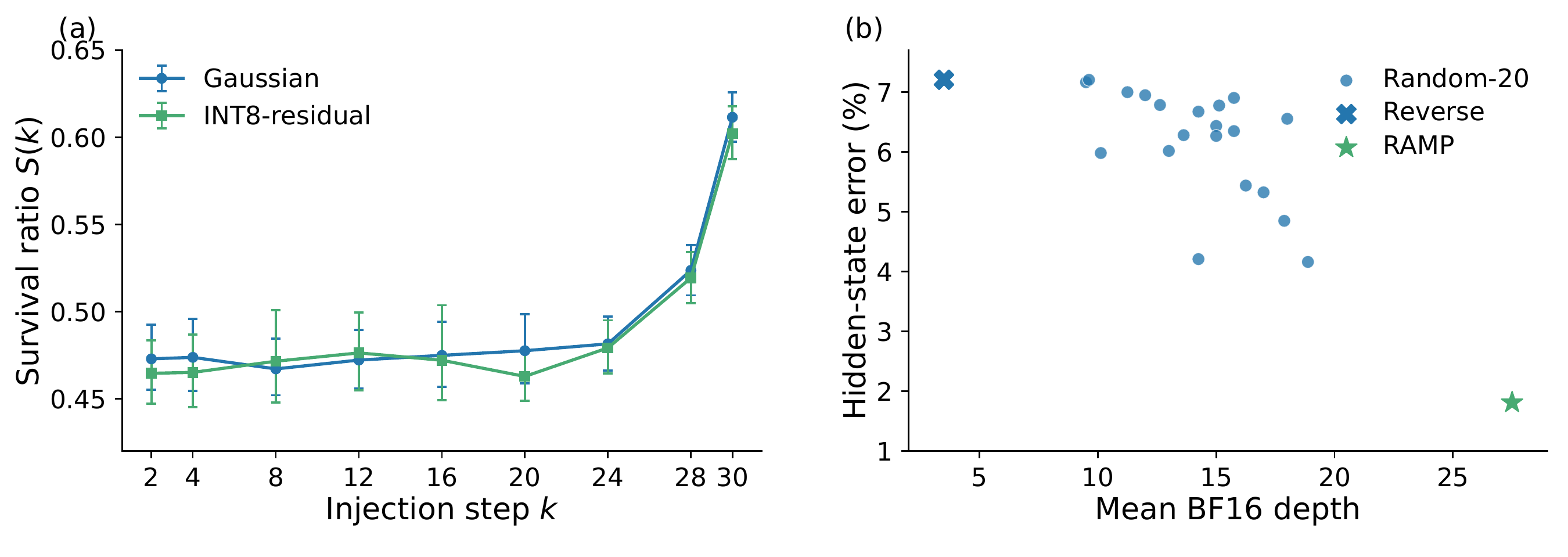}
    \caption{
    \textbf{Empirical validation of depth-dependent quantization error propagation.}
    \textbf{(a)} Perturbation survival as a function of injection depth. Both Gaussian perturbations and INT8 quantization-residual perturbations exhibit substantially lower survival when injected earlier in recurrent depth, while perturbations introduced near terminal depth retain a larger fraction of their initial error. Error bars denote the original 95\% confidence intervals.
    \textbf{(b)} Precision-placement comparison under the same budget of 24 FP8 and 8 BF16 recurrent steps. Each point represents one of 20 frozen random precision schedules, with the horizontal axis indicating the mean depth of its BF16 steps and the vertical axis showing final hidden-state relative error. Moving BF16 computation toward later recurrent depth is associated with lower final error. \texttt{RAMP} places all eight BF16 steps at the end and outperforms all 20 random schedules, whereas the reverse (\textsc{Early-BF16}) schedule places all eight BF16 steps at the beginning and produces substantially larger error.
    }
    \label{fig:theory_empirical_validation}
\end{figure}
\section{Problem Statement}

\subsection{Recurrent-Depth Language Models}
A recurrent-depth language model repeatedly applies a shared transformation to refine an internal representation across multiple recurrent steps. Let $\boldsymbol{h}^{(t)}$ denote the hidden representation after recurrent step $t$, with $\boldsymbol{h}^{(0)}$ produced by the model's input processing stage. Recurrent computation can be written as
\begin{equation}
\boldsymbol{h}^{(t+1)}
=
F_{\theta}\!\left(\boldsymbol{h}^{(t)}\right),
\qquad
t=0,\ldots,T-1,
\end{equation}
where $F_{\theta}$ is a shared recurrent block parameterized by $\theta$, and $T$ denotes the recurrent depth. Unlike a conventional depth-$T$ Transformer with distinct parameters at different layers, recurrent-depth models reuse the same parameters $\theta$ across recurrent steps. After the final recurrent step, the resulting representation $\boldsymbol{h}^{(T)}$ is processed by the model's output layers to produce token logits. In this work, we consider inference at a fixed recurrent depth $T$ and study the numerical precision used to execute the recurrent computation.

\subsection{Prefill in Recurrent-Depth Inference}
Given an input sequence of length $S$, the prefill stage processes the complete input sequence before autoregressive token generation begins. For a batch of size $B$, recurrent prefill operates on hidden representations containing $BS$ token positions. In a recurrent-depth model, the full-sequence representation is processed at every recurrent step, such that
\begin{equation}
\boldsymbol{h}^{(0)}
\rightarrow
\boldsymbol{h}^{(1)}
\rightarrow
\cdots
\rightarrow
\boldsymbol{h}^{(T)}.
\end{equation}
We define first-prefill latency as the time required to process the input sequence and produce the logits for the first generated token. Our primary efficiency metric is end-to-end first-prefill latency, and throughput is measured as the number of prompt tokens processed per second.

\subsection{Weight-and-Activation Quantization}
Weight-and-activation quantization reduces the numerical precision of both model weights and intermediate activations during inference. Consider a linear transformation
\begin{equation}
\boldsymbol{Y}
=
\boldsymbol{X}\boldsymbol{W},
\end{equation}
where $\boldsymbol{X}$ denotes the activation matrix, $\boldsymbol{W}$ denotes the weight matrix, and $\boldsymbol{Y}$ denotes the output matrix. Under weight-and-activation quantization, both operands are mapped to lower-precision numerical representations,
\begin{equation}
\widetilde{\boldsymbol{X}}
=
Q_A\!\left(\boldsymbol{X}\right),
\qquad
\widetilde{\boldsymbol{W}}
=
Q_W\!\left(\boldsymbol{W}\right),
\end{equation}
where $Q_A$ and $Q_W$ denote the activation and weight quantizers, respectively, and the corresponding low-precision computation is
\begin{equation}
\widetilde{\boldsymbol{Y}}
=
\widetilde{\boldsymbol{X}}
\widetilde{\boldsymbol{W}}.
\end{equation}
For recurrent-depth inference, let $p_t$ denote the numerical precision used at recurrent step $t$, and define a precision schedule as
\begin{equation}
\mathcal{P}
=
\left(p_0,p_1,\ldots,p_{T-1}\right).
\end{equation}
A uniform quantization schedule uses the same low-precision format at every recurrent step, whereas a mixed-precision schedule may use different numerical precisions across recurrent depth.
\section{Quantization Error Analysis}
\label{sec:error_analysis}

We analyze the error introduced by weight-and-activation quantization in recurrent-depth inference relative to a reference-precision trajectory. Our analysis addresses two questions: how numerical error introduced at one recurrent step propagates through subsequent recurrence, and how a fixed budget of quantized computation should be placed across recurrent depth. We require only local contraction of perturbations within the operating region relevant to quantized inference, rather than global contraction of the recurrent operator, and empirically examine this condition in our experiments. We first derive a general error bound for an arbitrary precision schedule and then show that, under a fixed quantization budget, placing low-precision computation earlier and reference-precision computation later minimizes this bound.

Let the reference-precision recurrent trajectory be
\begin{equation}
\boldsymbol{h}^{(t+1)}
=
F_{\theta}\!\left(\boldsymbol{h}^{(t)}\right),
\qquad
t=0,\ldots,T-1,
\label{eq:reference_recurrence}
\end{equation}
and let $\widetilde{F}_{\theta,t}$ denote the corresponding weight-and-activation quantized computation at recurrent step $t$. We define
\begin{equation}
e_t
=
\left\|
\widetilde{\boldsymbol{h}}^{(t)}
-
\boldsymbol{h}^{(t)}
\right\|
\label{eq:error_definition}
\end{equation}
as the deviation from the reference trajectory. Let $q_t\in\{0,1\}$ indicate whether recurrent step $t$ is quantized, where $q_t=1$ denotes quantized computation and $q_t=0$ denotes reference-precision computation, and let
\begin{equation}
\boldsymbol{q}
=
\left(q_0,q_1,\ldots,q_{T-1}\right)
\label{eq:schedule_indicator}
\end{equation}
denote the corresponding precision schedule.

\begin{theorem}[Quantization Error Propagation]
\label{thm:error_propagation}
Suppose that, within the relevant operating region, the reference recurrent operator satisfies
\begin{equation}
\left\|
F_{\theta}(\boldsymbol{x})
-
F_{\theta}(\boldsymbol{y})
\right\|
\leq
\rho_t
\left\|
\boldsymbol{x}
-
\boldsymbol{y}
\right\|,
\qquad
0\leq\rho_t<1,
\label{eq:local_contraction}
\end{equation}
and that each weight-and-activation quantized recurrent step satisfies the uniform perturbation bound
\begin{equation}
\left\|
\widetilde{F}_{\theta,t}(\boldsymbol{x})
-
F_{\theta}(\boldsymbol{x})
\right\|
\leq
\delta,
\qquad
\delta\geq0.
\label{eq:uniform_quantization_error}
\end{equation}
Then the recurrent deviation obeys
\begin{equation}
e_{t+1}
\leq
\rho_t e_t
+
q_t\delta.
\label{eq:one_step_schedule_error}
\end{equation}
If the quantized and reference trajectories share the same initial representation, such that $e_0=0$, the final deviation satisfies
\begin{equation}
e_T(\boldsymbol{q})
\leq
E(\boldsymbol{q})
\equiv
\delta
\sum_{i=0}^{T-1}
q_i
\prod_{j=i+1}^{T-1}
\rho_j.
\label{eq:general_schedule_bound}
\end{equation}
\end{theorem}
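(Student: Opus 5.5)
The proof has two parts: a one-step bound obtained from the triangle inequality, and an induction that unrolls it. Throughout, the quantized trajectory is the mixed-precision recurrence
\[
\widetilde{\boldsymbol{h}}^{(t+1)}=\widetilde{F}_{\theta,t}\bigl(\widetilde{\boldsymbol{h}}^{(t)}\bigr)\ \text{if } q_t=1,
\qquad
\widetilde{\boldsymbol{h}}^{(t+1)}=F_{\theta}\bigl(\widetilde{\boldsymbol{h}}^{(t)}\bigr)\ \text{if } q_t=0 .
\]
The contraction \eqref{eq:local_contraction} must be applicable at step $t$ to the pair $\widetilde{\boldsymbol{h}}^{(t)},\boldsymbol{h}^{(t)}$. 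I will state this explicitly as the meaning of ``within the relevant operating region'': both trajectories remain in the region where $F_\theta$ contracts with constant $\rho_t$ at step $t$.

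\textbf{One-step bound \eqref{eq:one_step_schedule_error}.} I split into the two cases of $q_t$.

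If $q_t=1$, I insert and subtract $F_\theta(\widetilde{\boldsymbol{h}}^{(t)})$:
\[
\bigl\|\widetilde{\boldsymbol{h}}^{(t+1)}-\boldsymbol{h}^{(t+1)}\bigr\|
\le
\bigl\|\widetilde{F}_{\theta,t}(\widetilde{\boldsymbol{h}}^{(t)})-F_\theta(\widetilde{\boldsymbol{h}}^{(t)})\bigr\|
+\bigl\|F_\theta(\widetilde{\boldsymbol{h}}^{(t)})-F_\theta(\boldsymbol{h}^{(t)})\bigr\| .
\]
The first term is at most $\delta$ by \eqref{eq:uniform_quantization_error}, applied at $\boldsymbol{x}=\widetilde{\boldsymbol{h}}^{(t)}$. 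The second term is at most $\rho_t e_t$ by local contraction.

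If $q_t=0$, only the contraction term appears. Both cases combine to $e_{t+1}\le \rho_t e_t+q_t\delta$.

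\textbf{Final bound \eqref{eq:general_schedule_bound}.} I show by induction on $t$ that
\[
e_t\le \delta\sum_{i=0}^{t-1} q_i\prod_{j=i+1}^{t-1}\rho_j ,
\]
with the empty sum equal to zero and the empty product equal to one.

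The base case $t=0$ is exactly $e_0=0$.

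For the inductive step, I apply the one-step bound and use $\rho_t\ge 0$, so multiplying the inductive hypothesis by $\rho_t$ preserves the inequality. This turns each product $\prod_{j=i+1}^{t-1}\rho_j$ into $\prod_{j=i+1}^{t}\rho_j$. The new term $q_t\delta$ is the $i=t$ summand, whose product is empty. Setting $t=T$ gives $E(\boldsymbol{q})$.

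\textbf{Main obstacle.} The algebra is routine. The delicate point is justifying that the contraction hypothesis applies along the quantized trajectory, since $\widetilde{\boldsymbol{h}}^{(t)}$ is off the reference path. I would handle this by making the operating-region assumption explicit: the region contains both trajectories, or more generally every point reached by any admissible schedule. Then \eqref{eq:local_contraction} and \eqref{eq:uniform_quantization_error} hold at every point the argument uses. No further obstacle is expected.
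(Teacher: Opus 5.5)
Your proposal is correct and follows the paper's proof essentially step for step: you use the same add-and-subtract of $F_\theta(\widetilde{\boldsymbol{h}}^{(t)})$ with a case split on $q_t$, followed by unrolling the recursion from $e_0=0$. Your formal induction, which notes that $\rho_t\ge 0$ is what allows multiplying the hypothesis through, and your explicit requirement that the contraction hold at the off-reference pair $(\widetilde{\boldsymbol{h}}^{(t)},\boldsymbol{h}^{(t)})$ are slightly more careful versions of what the paper leaves implicit in ``repeatedly applying'' the one-step bound and in the phrase ``within the relevant operating region.''
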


Theorem~\ref{thm:error_propagation} separates the final error into contributions from individual quantized recurrent steps. A perturbation introduced at step $i$ contributes to the final bound after being attenuated by all subsequent recurrent steps through the factor $\prod_{j=i+1}^{T-1}\rho_j$. Consequently, earlier quantization errors experience more subsequent contraction, whereas errors introduced near the end of recurrence retain a larger fraction of their effect. The proof is provided in Appendix~\ref{app:proof_error_propagation}.

\begin{theorem}[Optimal Precision Placement under a Fixed Quantization Budget]
\label{thm:optimal_placement}
Under the conditions of Theorem~\ref{thm:error_propagation}, consider all precision schedules containing exactly $K$ quantized recurrent steps,
\begin{equation}
\sum_{t=0}^{T-1}q_t
=
K.
\label{eq:fixed_quantization_budget}
\end{equation}
Define the low-to-high schedule used by \texttt{RAMP} as
\begin{equation}
q_t^{\mathrm{RAMP}}
=
\begin{cases}
1, & 0\leq t<K,\\
0, & K\leq t<T,
\end{cases}
\label{eq:ramp_schedule_theory}
\end{equation}
and the reverse schedule as
\begin{equation}
q_t^{\mathrm{Rev}}
=
\begin{cases}
0, & 0\leq t<T-K,\\
1, & T-K\leq t<T.
\end{cases}
\label{eq:reverse_schedule_theory}
\end{equation}
Then, for every schedule $\boldsymbol{q}$ satisfying Equation~\ref{eq:fixed_quantization_budget},
\begin{equation}
E\!\left(\boldsymbol{q}^{\mathrm{RAMP}}\right)
\leq
E(\boldsymbol{q})
\leq
E\!\left(\boldsymbol{q}^{\mathrm{Rev}}\right).
\label{eq:optimal_schedule_ordering}
\end{equation}
Thus, among all schedules with the same number of quantized recurrent steps, placing quantized computation as early as possible minimizes the theoretical final-error bound, while placing it as late as possible maximizes the bound.
\end{theorem}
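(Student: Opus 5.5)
The proof reduces to a statement about sums of monotone weights, using only the closed form of $E(\boldsymbol{q})$ from Theorem~\ref{thm:error_propagation}. First, I will define the per-step attenuation weights
\begin{equation*}
w_i \equiv \prod_{j=i+1}^{T-1}\rho_j, \qquad i=0,\ldots,T-1,
\end{equation*}
with the empty product giving $w_{T-1}=1$. Then Equation~\ref{eq:general_schedule_bound} reads $E(\boldsymbol{q})=\delta\sum_{i=0}^{T-1} q_i w_i$. In other words, $E$ is $\delta$ times the sum of the weights over the index set $S(\boldsymbol{q})=\{i: q_i=1\}$, which has exactly $K$ elements by Equation~\ref{eq:fixed_quantization_budget}.

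The key structural fact is that the weights are nondecreasing in depth. From $w_i=\rho_{i+1}w_{i+1}$ and $0\le\rho_{i+1}<1$, $w_{i+1}\ge 0$, I will get $w_i\le w_{i+1}$, so
\begin{equation*}
w_0\le w_1\le\cdots\le w_{T-1}.
\end{equation*}
This is where the local contraction hypothesis enters. Only $\rho_t\le 1$ is actually needed, and some $\rho_t=0$ is allowed, which merely creates ties.

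Next comes an index-matching argument instead of an exchange argument. Write the elements of $S(\boldsymbol{q})$ in increasing order as $s_1<s_2<\cdots<s_K$. These are $K$ distinct integers in $\{0,\ldots,T-1\}$. There are $k-1$ of them strictly below $s_k$, which gives $s_k\ge k-1$. There are $K-k$ of them strictly above $s_k$, which gives $s_k\le T-K+k-1$. By monotonicity of the weights,
\begin{equation*}
w_{k-1}\le w_{s_k}\le w_{T-K+k-1}, \qquad k=1,\ldots,K.
\end{equation*}
Summing over $k$ and multiplying by $\delta\ge 0$, the left sum is exactly $E(\boldsymbol{q}^{\mathrm{RAMP}})$, since its support is $\{0,\ldots,K-1\}$. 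The right sum is exactly $E(\boldsymbol{q}^{\mathrm{Rev}})$, since its support is $\{T-K,\ldots,T-1\}$. This yields Equation~\ref{eq:optimal_schedule_ordering}. The edge cases $K=0$ and $K=T$ are trivial, because all three schedules then coincide.

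I do not expect a genuine obstacle; the only care needed is in the two counting bounds on $s_k$ and in keeping the inequalities non-strict, since ties among the $w_i$ are possible. I will also emphasize in the write-up that the result orders the \emph{bound} $E(\boldsymbol{q})$, not the true error $e_T(\boldsymbol{q})$. The latter is only controlled through Theorem~\ref{thm:error_propagation}.
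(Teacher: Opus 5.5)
Your proposal is correct and follows the paper's proof. Both write $E(\boldsymbol{q})$ as a sum of per-step weights $w_i$ that are nondecreasing in depth and take the first (resp.\ last) $K$ indices as the minimizing (resp.\ maximizing) choice, and your index-matching bound $k-1\le s_k\le T-K+k-1$ supplies the justification for that selection step, which the paper only asserts.
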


Theorem~\ref{thm:optimal_placement} provides the central theoretical motivation for \texttt{RAMP}: its advantage does not arise merely from using fewer quantized steps than full quantization, but from where low- and high-precision computation are placed across recurrent depth. Early quantization errors receive the largest amount of subsequent attenuation, while preserving reference precision for terminal refinement avoids introducing the most weakly attenuated errors. Random or interleaved schedules with the same quantization budget lie between the low-to-high and high-to-low extremes in the resulting bound. As a direct consequence, \texttt{RAMP} also has a tighter bound than full quantization,
\begin{equation}
E_{\mathrm{AllQ}}
-
E_{\mathrm{RAMP}}
=
\delta
\sum_{i=K}^{T-1}
\prod_{j=i+1}^{T-1}
\rho_j
\geq
0,
\label{eq:allq_ramp_consequence}
\end{equation}
where
\begin{equation}
E_{\mathrm{AllQ}}
=
\delta
\sum_{i=0}^{T-1}
\prod_{j=i+1}^{T-1}
\rho_j
\label{eq:allq_bound}
\end{equation}
and
\begin{equation}
E_{\mathrm{RAMP}}
=
\delta
\sum_{i=0}^{K-1}
\prod_{j=i+1}^{T-1}
\rho_j.
\label{eq:ramp_bound}
\end{equation}
The proof of Theorem~\ref{thm:optimal_placement} is provided in Appendix~\ref{app:proof_optimal_placement}.
\begin{figure}[t]
    \centering
    \includegraphics[width=\linewidth]{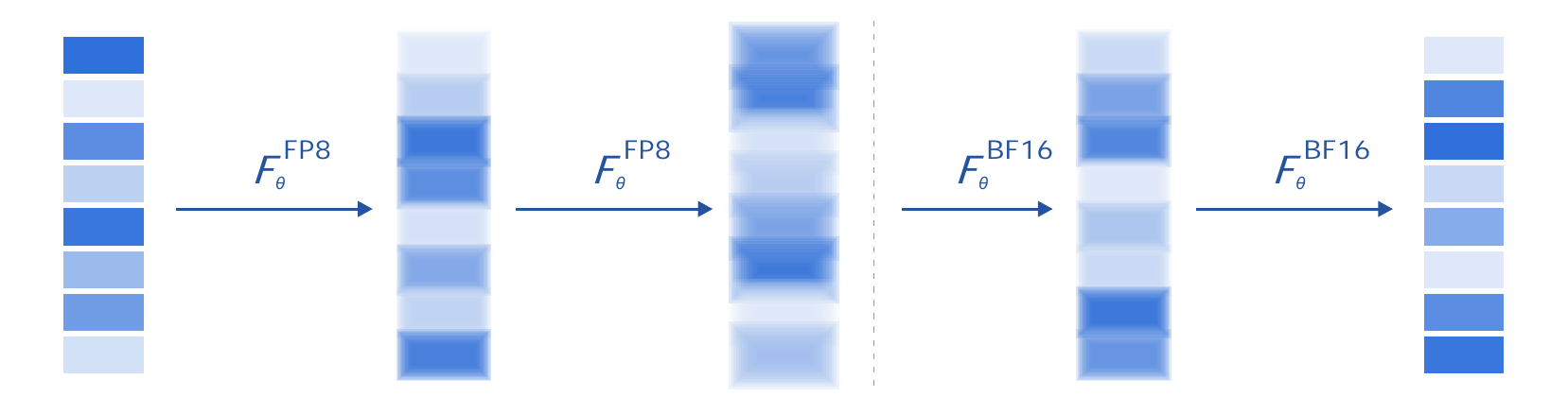}
    \caption{
    \textbf{Overview of \texttt{RAMP}.}
    \texttt{RAMP} executes the shared recurrent block $F_{\theta}$ using FP8 weight-and-activation computation during early recurrent refinement and restores BF16 computation for the final recurrent steps. The dashed line marks the precision switch, while the recurrent architecture and total recurrent depth remain unchanged.
    }
    \label{fig:ramp_method}
\end{figure}

\section{RAMP}
\label{sec:ramp}

Theorem~\ref{thm:optimal_placement} shows that, under a fixed precision budget, placing low-precision computation earlier in recurrent depth minimizes the final-error upper bound. Motivated by this result, we introduce \texttt{RAMP} (\textbf{R}ecurrent \textbf{A}daptive \textbf{M}ixed \textbf{P}recision), a training-free inference method that performs early recurrent refinement with low-precision computation and preserves the original precision for terminal refinement.

\subsection{Depth-Adaptive Precision Schedule}
Let $T$ denote the total recurrent depth and $K<T$ denote the precision-switching step. \texttt{RAMP} defines the recurrent precision schedule as
\begin{equation}
p_t
=
\begin{cases}
p_{\mathrm{low}}, & 0 \leq t < K,\\
p_{\mathrm{ref}}, & K \leq t < T,
\end{cases}
\end{equation}
where $p_{\mathrm{low}}$ denotes the low-precision format and $p_{\mathrm{ref}}$ denotes the model's original inference precision. The corresponding recurrent computation is
\begin{equation}
\widetilde{\boldsymbol{h}}^{(t+1)}
=
\begin{cases}
\widetilde{F}_{\theta}
\!\left(
\widetilde{\boldsymbol{h}}^{(t)}
\right),
& 0 \leq t < K,\\
F_{\theta}
\!\left(
\widetilde{\boldsymbol{h}}^{(t)}
\right),
& K \leq t < T,
\end{cases}
\end{equation}
where $\widetilde{F}_{\theta}$ denotes the weight-and-activation quantized recurrent operator and $F_{\theta}$ denotes the reference-precision recurrent operator. \texttt{RAMP} therefore differs from uniform quantization only in the terminal recurrent steps: instead of continuing to inject quantization error throughout the complete trajectory, it restores reference-precision computation after step $K$.

\subsection{FP8--BF16 Instantiation}
\label{FP8-BF16}
In our implementation, we instantiate $p_{\mathrm{low}}$ with FP8 and $p_{\mathrm{ref}}$ with BF16. During the low-precision phase, the eligible dense linear operations in the recurrent block use FP8 weights and dynamically quantized FP8 activations with native low-precision matrix multiplication, while the recurrent hidden representation and the surrounding operations remain in BF16. At step $K$, the recurrent block switches back to its original BF16 computation and remains in BF16 for all subsequent recurrent steps. Unless otherwise specified, we use $T=32$ and $K=24$, corresponding to 24 FP8 recurrent steps followed by 8 BF16 recurrent steps. \texttt{RAMP} requires no retraining, does not modify the recurrent architecture or recurrent depth, and introduces only a single precision transition during inference.
\section{Related Work}

\subsection{Recurrent-Depth and Looped Language Models}
Recurrent computation across network depth has a long history, with Universal Transformers introducing parameter sharing and adaptive recurrence across Transformer depth \citep{dehghani2018universal}. More recent work has revisited looped or recurrent-depth architectures as a mechanism for scaling test-time computation without explicitly generating intermediate reasoning tokens. ~\citet{saunshi2025reasoning} show that looped Transformers can emulate substantially deeper networks on reasoning tasks, while ~\citet{geiping2026scaling} introduce a recurrent-depth language model whose performance improves with additional recurrent computation. Subsequent work studies elastic recurrent depth, recurrent stability, and latent reasoning behavior \citep{jeddi2026loopformer,yang2026stabilizing}. In contrast to these works, which primarily study modeling capability or recurrent dynamics, we focus on the inference cost of recurrent-depth prefill and exploit depth-dependent numerical sensitivity to reduce this cost.

\subsection{Quantized Large Language Model Inference}
Quantization reduces inference cost by representing model computation at lower numerical precision. Early methods focus on efficient INT8 execution and weight compression \citep{lin2024awq}, while weight-and-activation quantization directly lowers the precision of dense arithmetic. SmoothQuant enables training-free W8A8 inference by mitigating activation outliers \citep{xiao2023smoothquant}, and subsequent systems such as Atom and QServe extend low-bit weight-and-activation execution toward higher-throughput LLM serving \citep{zhao2024atom,lin2025qserve}. Recent methods such as QuaRot and SpinQuant further improve aggressive weight-and-activation quantization by transforming representations to reduce quantization difficulty \citep{ashkboos2024quarot,ICLR2025_e5b1c0d4}. These approaches primarily seek a quantization configuration that can be applied throughout a feed-forward model. \texttt{RAMP} instead exploits recurrent depth as an additional precision-allocation dimension, assigning different precision to different executions of the same shared block.

\subsection{Mixed-Precision and Prefill-Efficient Inference}
Mixed-precision quantization allocates different numerical precisions to different parts of a network in order to balance accuracy and hardware efficiency. Prior methods typically assign precision across layers, operators, channels, or tensors according to sensitivity or hardware cost \citep{yao2021hawq,zhao2024atom}. \texttt{RAMP} differs in that the same recurrent block is executed repeatedly, and precision is allocated across recurrent time rather than across distinct network components. Our setting is also motivated by the distinct system characteristics of LLM prefill: prefill processes many prompt tokens in parallel and is typically compute-intensive, while its latency determines time to first token \citep{agrawal2024taming,zhong2024distserve,patel2024splitwise}. We therefore focus specifically on reducing recurrent-depth prefill latency through depth-adaptive numerical precision.
\section{Experiments}
\label{sec:experiments}

\subsection{Experimental Setup}

\paragraph{Model and precision configuration.}
We evaluate \texttt{RAMP} on Huginn-0125 \citep{geiping2026scaling}, using the default recurrent depth $T=32$. The model consists of a prelude, a shared recurrent block repeatedly applied across recurrent depth, and a coda that produces the final token logits. Unless otherwise specified, \texttt{RAMP} uses a single-precision transition at $K=24$: the first 24 recurrent steps execute the eligible dense linear operations using FP8 weight-and-activation computation, while the final 8 recurrent steps restore the original BF16 computation. The recurrent hidden state, prelude, coda, and operations outside the quantized recurrent linears remain in BF16. 
End-to-end efficiency is evaluated separately on an NVIDIA H100 using native FP8 weight-and-activation arithmetic.

\paragraph{Datasets and evaluation cohorts.}
Our primary language-modeling fidelity evaluation uses frozen WikiText sequences, with PG19 serving as an independent long-context generalization dataset. For WikiText, we construct 32 fixed evaluation endpoints and form nested input contexts of $S\in\{128,256,512,1024,2048,4096\}$ tokens ending at the same corpus location. For each endpoint, the final 64 target tokens are identical across context lengths, so increasing $S$ changes only the amount of preceding context rather than the evaluated continuation. For PG19, we similarly select 32 frozen validation books and construct document-contained nested contexts of $S\in\{512,1024,2048,4096\}$ with identical tail-64 prediction targets across context lengths. This paired construction is used for the long-context fidelity results in Section~6.2 and prevents context-length comparisons from being confounded by changes in the prediction target. The fixed-budget precision-placement experiment in Figure~\ref{fig:theory_empirical_validation} uses the earlier frozen $S=128$ WikiText cohort for language-modeling loss together with a fixed 12-prompt fidelity cohort for hidden-state and logit comparisons. The perturbation-recoverability experiment in Figure~\ref{fig:theory_empirical_validation} uses a separate frozen 24-prompt diagnostic cohort and is used only to test the error-propagation mechanism rather than downstream language-modeling quality.

\paragraph{Baselines.}
We compare \texttt{RAMP} against four primary baselines that isolate different explanations for its behavior. \textbf{BF16-32} executes all 32 recurrent steps in BF16 and serves as the full-precision reference. \textbf{BF16-8} executes only 8 recurrent steps, all in BF16, and serves as a reduced-depth control for testing whether the first 24 low-precision steps in \texttt{RAMP} provide useful recurrent refinement rather than merely adding dispensable computation. \textbf{All-FP8} executes all 32 recurrent steps with FP8 weight-and-activation computation and measures the effect of uniform quantization throughout recurrence. \textbf{Early-BF16} executes 8 BF16 steps followed by 24 FP8 steps; it uses exactly the same 24-FP8/8-BF16 precision budget as \texttt{RAMP} but reverses their placement, providing a matched-budget control for precision ordering. Our method, \textbf{\texttt{RAMP}}, executes 24 FP8 steps followed by 8 BF16 steps. Except for the intentionally shallower BF16-8 control, all primary configurations use the same model parameters, recurrent depth, inputs, and number of recurrent applications.

\paragraph{Quality metrics.}
We measure fidelity relative to the BF16-32 trajectory. Our primary metric is the paired change in teacher-forced negative log-likelihood over the final 64 target tokens, denoted $\Delta$NLL, where each configuration is compared against BF16 on exactly the same endpoint and target tokens. We additionally report final hidden-state relative $\ell_2$ error and final-logit relative $\ell_2$ error to distinguish representation-level deviation from output-level deviation. Where available under the same evaluation protocol, we also retain KL divergence between the predictive distributions and top-1 token agreement. Confidence intervals for $\Delta$NLL are obtained by bootstrapping over the frozen evaluation endpoints. We emphasize these continuous fidelity measures because numerical deviations introduced by quantization may be detectable before they change the model's top-1 prediction.

\paragraph{Precision-placement evaluation.}
To test the fixed-budget prediction of Theorem~\ref{thm:optimal_placement} independently of the hand-designed \texttt{RAMP} and Early-BF16 schedules, we additionally evaluate 20 frozen random precision schedules. Every random schedule contains exactly 24 FP8 and 8 BF16 recurrent steps; the eight BF16 positions are sampled uniformly before any outcomes are observed, and neither \texttt{RAMP} nor Early-BF16 is included in the random set. We compare the schedules using paired WikiText NLL, final hidden-state error, and final logit error, and summarize precision placement by the mean recurrent depth of the BF16 steps. The perturbation experiment in Figure~\ref{fig:theory_empirical_validation} provides a complementary mechanism-level test: perturbations are injected at multiple recurrent depths, and their residual error after the final recurrent step is measured to quantify depth-dependent attenuation.

\paragraph{Systems evaluation.}
We evaluate end-to-end first-prefill efficiency on an NVIDIA H100 using native FP8 weight-and-activation arithmetic. We measure single-request prefill over context lengths $S\in\{128,256,512,1024,2048,4096\}$ and compare \texttt{RAMP} directly against the BF16-32 reference. Timing begins with token IDs already resident on the GPU and an empty attention cache, and ends when the model produces the logits required for the first generated token. Tokenization, dataset loading, host-to-device transfer, network overhead, and subsequent autoregressive decoding are excluded. We report first-prefill latency, prompt-token throughput, and speedup relative to BF16. Both BF16 and \texttt{RAMP} use the same semantics-preserving optimized empty-cache prefill path, which removes per-token cache bookkeeping while leaving the model computation and produced outputs unchanged. The reported speedups therefore reflect the realized effect of numerical precision under the same optimized prefill implementation rather than an implementation asymmetry between the two configurations.

\begin{figure}[t]
    \centering
    \begin{subfigure}[t]{0.48\linewidth}
        \centering
        \includegraphics[width=\linewidth]{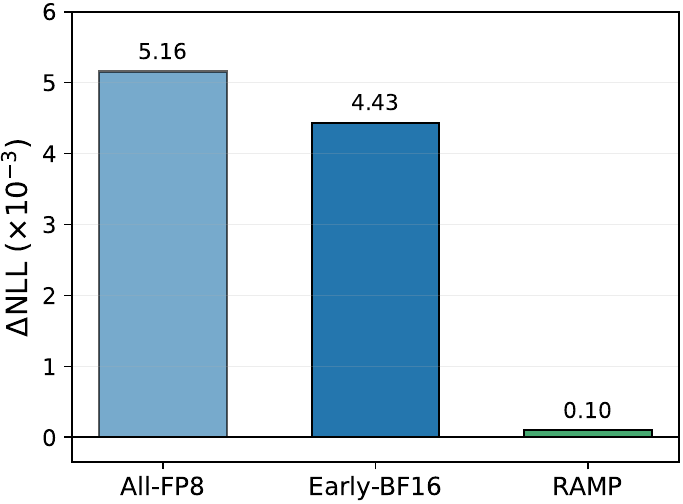}
        \caption{}
        \label{fig:4K}
    \end{subfigure}
    \hfill
    \begin{subfigure}[t]{0.48\linewidth}
        \centering
        \includegraphics[width=\linewidth]{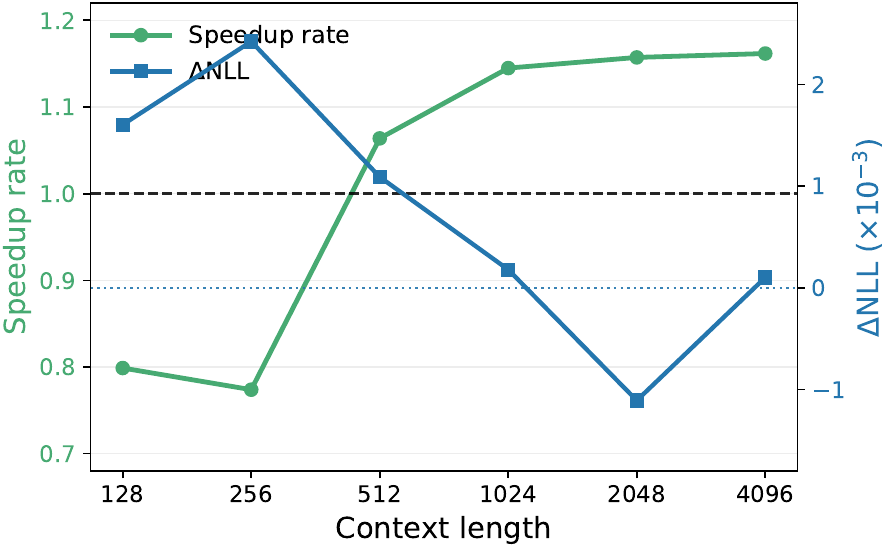}
        \caption{}
        \label{fig:ramp_context_scaling}
    \end{subfigure}
    \caption{
    \textbf{Quality preservation and end-to-end prefill efficiency of \texttt{RAMP}.}
    \textbf{(a)} WikiText prefill fidelity at 4K context under different precision schedules. \texttt{RAMP} uses 24 FP8 recurrent steps followed by 8 BF16 steps, while Early-BF16 uses the same precision budget in the opposite order. Restoring BF16 computation for terminal refinement substantially reduces the NLL deviation from BF16-32.
    \textbf{(b)} WikiText fidelity and H100 first-prefill efficiency across context lengths. The green curve reports end-to-end first-prefill speedup over BF16, while the blue curve reports paired WikiText tail-64 $\Delta$NLL relative to BF16-32. \texttt{RAMP} preserves near-BF16 fidelity across context lengths up to 4K while reaching approximately $1.16\times$ first-prefill speedup at long context.
    }
    \label{fig:ramp_main_results}
\end{figure}

\subsection{\texttt{RAMP} Preserves Full-Precision Prefill Quality Across Datasets}

\begin{table}[t]
    \centering
    \caption{
    \textbf{Prefill fidelity at 4K context on WikiText and PG19.}
    All metrics are measured relative to the full-depth BF16-32 reference
    within each dataset. BF16-8 is a reduced-depth control that executes only
    8 recurrent steps, all in BF16. All-FP8, Early-BF16, and \texttt{RAMP}
    execute the full 32 recurrent steps; Early-BF16 and \texttt{RAMP} use the
    same budget of 24 FP8 and 8 BF16 recurrent steps but place the BF16
    computation at opposite ends of recurrent depth.
    Lower is better for error metrics, while higher is better for top-1
    agreement.
    }
    \label{tab:quality_4k}
    \small
    \setlength{\tabcolsep}{4.2pt}
    \renewcommand{\arraystretch}{1.08}
    \begin{tabular}{@{}lrrrrrrrr@{}}
        \toprule
        &
        \multicolumn{4}{c}{\textbf{WikiText}}
        &
        \multicolumn{4}{c}{\textbf{PG19}} \\
        \cmidrule(lr){2-5}
        \cmidrule(lr){6-9}

        \textbf{Schedule}
        & $\Delta$\textbf{NLL} $\downarrow$
        & \textbf{Hidden} $\ell_2$ $\downarrow$
        & \textbf{Logit} $\ell_2$ $\downarrow$
        & \textbf{Top-1} $\uparrow$
        & $\Delta$\textbf{NLL} $\downarrow$
        & \textbf{Hidden} $\ell_2$ $\downarrow$
        & \textbf{Logit} $\ell_2$ $\downarrow$
        & \textbf{Top-1} $\uparrow$ \\
        \midrule

        BF16-32
        & 0
        & 0
        & 0
        & 100.00\%
        & 0
        & 0
        & 0
        & 100.00\% \\

        BF16-8
        & +0.12461
        & 36.14\%
        & 15.76\%
        & 83.25\%
        & +0.11260
        & 37.68\%
        & 15.13\%
        & 79.79\% \\

        All-FP8
        & +0.00516
        & 8.18\%
        & 3.80\%
        & 96.63\%
        & +0.00233
        & 8.45\%
        & 3.57\%
        & 95.61\% \\

        Early-BF16
        & +0.00444
        & 8.15\%
        & 3.78\%
        & 95.95\%
        & +0.00325
        & 8.44\%
        & 3.58\%
        & 95.61\% \\

        \textbf{\texttt{RAMP}}
        & \textbf{+0.00010}
        & \textbf{2.65\%}
        & \textbf{1.34\%}
        & \textbf{98.73\%}
        & \textbf{+0.00007}
        & \textbf{3.08\%}
        & \textbf{1.35\%}
        & \textbf{98.29\%} \\

        \bottomrule
    \end{tabular}
\end{table}

\paragraph{RAMP consistently recovers the full-precision trajectory across datasets.}
Table~\ref{tab:quality_4k} compares the main precision schedules at 4K context on WikiText and PG19. The same qualitative pattern appears on the two independently constructed evaluation cohorts. Uniform FP8 computation produces substantially larger deviations in the final hidden representation and output logits, whereas \texttt{RAMP} retains low-precision execution for most of the recurrent trajectory while remaining much closer to the BF16 reference. The agreement across language-modeling loss, representation error, logit error, and top-1 agreement shows that this recovery is neither metric-specific nor restricted to a single corpus.

\paragraph{Precision placement, rather than precision budget alone, determines fidelity.}
Early-BF16 provides a matched-budget control for the placement of high-precision computation: it uses exactly the same number of FP8 and BF16 recurrent steps as \texttt{RAMP}, but reverses their order. On both WikiText and PG19, placing BF16 computation before the quantized phase leaves the final trajectory close to uniform FP8 execution, whereas reserving the same high-precision budget for terminal refinement substantially reduces representation- and logit-level error. On PG19, the likelihood difference between the two matched-budget schedules is not statistically resolved, but the representation-level ordering remains pronounced. Together, the two datasets support the depth asymmetry predicted by Theorem~\ref{thm:optimal_placement}: early quantization errors have more subsequent recurrent computation over which they can be attenuated, while preserving high precision near terminal depth avoids introducing weakly recoverable errors late in the trajectory.

\paragraph{Early low-precision recurrent steps perform useful refinement.}
BF16-8 provides a reduced-depth control for testing whether the first 24 low-precision recurrent steps in \texttt{RAMP} can simply be removed.
Across both WikiText and PG19, executing only eight BF16 recurrent steps produces substantially larger likelihood, representation, and logit deviations than the full-depth \texttt{RAMP} trajectory. 
This behavior rules out the interpretation that only the final eight high-precision steps are responsible for \texttt{RAMP}'s fidelity.
Instead, the preceding 24 FP8 steps continue to perform useful recurrent refinement, while the terminal BF16 phase
recovers the trajectory toward the full-precision solution.


\paragraph{Quality remains stable across long contexts.}
We additionally evaluate \texttt{RAMP} using nested contexts while holding the prediction target fixed. On WikiText, paired $\Delta$NLL remains close to zero from 128 to 4096 tokens with no systematic growth, as shown in Figure~\ref{fig:ramp_context_scaling}. On PG19, \texttt{RAMP} likewise remains close to BF16-32 over 512--4096 tokens: hidden-state and logit errors remain stable, while $\Delta$NLL stays small with a mild positive trend toward longer contexts. Thus, across both datasets, increasing the amount of full-sequence recurrent computation performed under FP8 does not lead to a corresponding growth in representation-level deviation over the evaluated long-context range.

\subsection{End-to-End Long-Context Prefill Efficiency}

\begin{table}[t]
    \centering
    \caption{
    \textbf{End-to-end first-prefill latency across context lengths on an NVIDIA H100.}
    BF16 and \texttt{RAMP} use the same optimized empty-cache prefill path.
    Speedup is computed as BF16 latency divided by \texttt{RAMP} latency; values above $1\times$ indicate faster execution with \texttt{RAMP}.
    }
    \label{tab:prefill_efficiency}
    \small
    \setlength{\tabcolsep}{9pt}
    \renewcommand{\arraystretch}{1.08}
    \begin{tabular}{@{}rrrr@{}}
        \toprule
        \textbf{Context Length}
        & \textbf{BF16 (ms)}
        & \textbf{\texttt{RAMP} (ms)}
        & \textbf{Speedup} $\uparrow$ \\
        \midrule
        128
        & 131.784
        & 164.972
        & 0.7988$\times$ \\

        256
        & 152.802
        & 197.474
        & 0.7738$\times$ \\

        512
        & 148.473
        & 139.567
        & 1.0638$\times$ \\

        1024
        & 274.090
        & 239.395
        & 1.1449$\times$ \\

        2048
        & 543.158
        & 469.357
        & 1.1572$\times$ \\

        4096
        & 1103.444
        & 949.847
        & 1.1617$\times$ \\
        \bottomrule
    \end{tabular}
\end{table}

\paragraph{RAMP exhibits a clear long-context efficiency regime.}
Table~\ref{tab:prefill_efficiency} reports end-to-end first-prefill latency as context length increases from 128 to 4096 tokens. At short contexts, \texttt{RAMP} is slower than BF16 because low-precision execution overhead is large relative to the recurrent matrix computation being accelerated. As the context becomes longer, this overhead is increasingly amortized by the larger full-sequence recurrent workload. The crossover occurs between 256 and 512 tokens, after which \texttt{RAMP} becomes faster than BF16, with the gain becoming stable from 1K tokens onward.
Fidelity results are shown in Appendix~\ref{app:context_scaling}.

\paragraph{RAMP reaches a stable $\mathbf{1.16\times}$ end-to-end speedup at long context.}
At 4K context, first-prefill latency decreases from 1103.444\,ms with BF16 to 949.847\,ms with \texttt{RAMP}, corresponding to a $1.1617\times$ speedup and a 13.9\% reduction in latency. Similar gains are already observed at 1K and 2K contexts, where \texttt{RAMP} achieves $1.1449\times$ and $1.1572\times$ speedup, respectively. The consistent long-context gains show that the 4K result is not an isolated operating point: once the recurrent prefill workload is sufficiently large, native FP8 computation converts the reduced arithmetic precision of the early recurrent steps into stable end-to-end savings.

\paragraph{The systems gain grows while WikiText fidelity remains stable.}
Figure~\ref{fig:ramp_context_scaling} juxtaposes H100 first-prefill efficiency with the paired WikiText fidelity measurements from Section~6.2. As context length increases, the first-prefill speedup rises and approaches $1.16\times$, while the corresponding WikiText $\Delta$NLL remains close to zero. Combined with the independent PG19 fidelity results in Section~6.2, these measurements show that the long-context regime in which low-precision execution becomes most beneficial does not require a corresponding loss of prefill fidelity.

\subsection{Precision Placement Matters}

\begin{table}[t]
    \centering
    \caption{
    \textbf{Fixed-budget precision-placement comparison on the fixed 12-prompt fidelity cohort.}
    All configurations use exactly 24 FP8 and 8 BF16 recurrent steps.
    Random-20 reports the mean hidden-state error across the frozen random schedules, while Early-BF16 and \texttt{RAMP} place all BF16 computation at the beginning and end of recurrent depth, respectively.
    }
    \label{tab:precision_placement}
    \small
    \setlength{\tabcolsep}{9pt}
    \renewcommand{\arraystretch}{1.08}
    \begin{tabular}{@{}lcc@{}}
        \toprule
        \textbf{Schedule}
        & \textbf{BF16 Placement}
        & \textbf{Hidden} $\ell_2$ $\downarrow$ \\
        \midrule
        Random-20 (mean)
        & Random
        & 6.17\% \\

        Early-BF16
        & Steps 0--7
        & 7.20\% \\

        \textbf{\texttt{RAMP}}
        & \textbf{Steps 24--31}
        & \textbf{1.81\%} \\
        \bottomrule
    \end{tabular}
\end{table}

\paragraph{RAMP consistently outperforms random precision placement.}
We isolate the effect of precision placement by holding the precision budget fixed and varying only which recurrent steps use BF16. As summarized in Table~\ref{tab:precision_placement}, \texttt{RAMP} produces substantially lower final hidden-state error on the fixed 12-prompt fidelity cohort than either the random schedules or Early-BF16. The same qualitative advantage holds for paired NLL on the frozen $S=128$ WikiText cohort and for final-logit error on the prompt cohort, with \texttt{RAMP} outperforming every frozen random schedule on all three fidelity measures. Thus, the benefit of \texttt{RAMP} cannot be explained merely by the presence of a fixed number of high-precision recurrent steps.

\paragraph{Later high-precision computation systematically improves fidelity.}
Figure~\ref{fig:theory_empirical_validation} reveals a clear relationship between BF16 placement and final hidden-state error across the frozen random schedules. Schedules that place their BF16 steps later in recurrent depth tend to achieve higher fidelity, while schedules that allocate high precision earlier retain larger terminal deviations. \texttt{RAMP} and Early-BF16 occupy the two placement extremes, providing a matched-budget comparison in which only the direction of precision allocation changes.

\paragraph{Observed precision ordering agrees with the theoretical prediction.}
These results provide direct empirical support for Theorem~\ref{thm:optimal_placement}. Under the local contraction model, perturbations introduced earlier have more subsequent recurrent steps over which they can be attenuated, whereas perturbations introduced near terminal depth have less opportunity to recover. The observed schedule ordering follows the same prediction: low precision is most effectively allocated to early recurrent refinement, while high precision is most valuable near the end of the trajectory.

\subsection{Precision-Switch Ablation}

\begin{table}[t]
    \centering
    \caption{
    \textbf{Effect of the terminal BF16 cleanup length on WikiText at 4K context.}
    All configurations use the same 32 recurrent steps while varying the precision-switch point.
    Lower is better for both metrics.
    }
    \label{tab:switch_ablation}
    \small
    \setlength{\tabcolsep}{10pt}
    \renewcommand{\arraystretch}{1.08}
    \begin{tabular}{@{}lcc@{}}
        \toprule
        \textbf{FP8 $\rightarrow$ BF16 Steps}
        & $\Delta$\textbf{NLL} $\downarrow$
        & \textbf{Hidden} $\ell_2$ $\downarrow$ \\
        \midrule
        32 $\rightarrow$ 0
        & +0.00516
        & 8.18\% \\

        \textbf{24 $\rightarrow$ 8}
        & \textbf{+0.00010}
        & \textbf{2.65\%} \\

        16 $\rightarrow$ 16
        & $-0.00016$
        & 1.91\% \\

        0 $\rightarrow$ 32
        & 0
        & 0 \\
        \bottomrule
    \end{tabular}
\end{table}

\paragraph{A short terminal BF16 phase recovers most of the lost fidelity.}
Table~\ref{tab:switch_ablation} varies the precision-switch point on the 4K WikiText evaluation while keeping the total recurrent depth fixed. Uniform FP8 execution produces a clear deviation from the BF16 reference, whereas restoring BF16 for only the final eight recurrent steps removes most of this error. Extending the BF16 phase further continues to improve representation fidelity, but the additional recovery beyond the default \texttt{RAMP} schedule is comparatively small.

\paragraph{The default switch balances fidelity and low-precision coverage.}
The ablation highlights the tradeoff underlying the choice of $K=24$. Moving the switch earlier allocates more recurrent computation to BF16 and therefore provides additional numerical recovery, but correspondingly reduces the fraction of recurrent work executed in FP8. The default \texttt{RAMP} configuration preserves low-precision execution for three quarters of the recurrent trajectory while already recovering most of the fidelity gap to full BF16, making it the operating point used throughout the main experiments.
\section{Conclusion}

We study mixed-precision inference for recurrent-depth language models from the perspective of where quantized computation should be placed across recurrent depth. Our analysis shows that, under locally contractive recurrent dynamics, perturbations introduced earlier receive more subsequent attenuation, yielding a smaller final-error upper bound than perturbations introduced near terminal depth. This depth asymmetry is strongly supported by perturbation experiments and fixed-budget precision-placement comparisons. Motivated by this observation, we introduce \texttt{RAMP}, a training-free inference method that performs early recurrent refinement using FP8 weight-and-activation computation and restores BF16 precision for terminal refinement. Across context lengths up to 4K, \texttt{RAMP} preserves near-full-precision prefill quality while converting low-precision recurrent computation into up to $1.16\times$ end-to-end first-prefill speedup on an NVIDIA H100, corresponding to a 13.9\% latency reduction. These results suggest that recurrent depth is not only an additional axis for scaling computation, but also an opportunity for structured precision allocation: early recurrent computation can be executed aggressively in low precision, while a short high-precision terminal phase can recover most of the fidelity of full-precision inference.
\newpage

\bibliography{references}

@article{geiping2026scaling,
  title={Scaling up test-time compute with latent reasoning: A recurrent depth approach},
  author={Geiping, Jonas and McLeish, Sean and Jain, Neel and Kirchenbauer, John and Singh, Siddharth and Bartoldson, Brian and Kailkhura, Bhavya and Bhatele, Abhinav and Goldstein, Tom},
  journal={Advances in Neural Information Processing Systems},
  volume={38},
  pages={41340--41391},
  year={2026}
}

@article{lin2025qserve,
  title={Qserve: W4a8kv4 quantization and system co-design for efficient llm serving},
  author={Lin, Yujun and Tang, Haotian and Yang, Shang and Zhang, Zhekai and Xiao, Guangxuan and Gan, Chuang and Han, Song},
  journal={Proceedings of Machine Learning and Systems},
  volume={7},
  year={2025}
}

@inproceedings{ICLR2025_e5b1c0d4,
 author = {Liu, Zechun and Zhao, Changsheng and Fedorov, Igor and Soran, Bilge and Choudhary, Dhruv and Krishnamoorthi, Raghuraman and Chandra, Vikas and Tian, Yuandong and Blankevoort, Tijmen},
 booktitle = {International Conference on Learning Representations},
 editor = {Y. Yue and A. Garg and N. Peng and F. Sha and R. Yu},
 pages = {92009--92032},
 title = {SpinQuant: LLM Quantization with Learned Rotations},
 url = {https://proceedings.iclr.cc/paper_files/paper/2025/file/e5b1c0d4866f72393c522c8a00eed4eb-Paper-Conference.pdf},
 volume = {2025},
 year = {2025}
}

@article{dettmers2022gpt3,
  title={Gpt3. int8 (): 8-bit matrix multiplication for transformers at scale},
  author={Dettmers, Tim and Lewis, Mike and Belkada, Younes and Zettlemoyer, Luke},
  journal={Advances in neural information processing systems},
  volume={35},
  pages={30318--30332},
  year={2022}
}

@inproceedings{zhong2024distserve,
  title={$\{$DistServe$\}$: Disaggregating prefill and decoding for goodput-optimized large language model serving},
  author={Zhong, Yinmin and Liu, Shengyu and Chen, Junda and Hu, Jianbo and Zhu, Yibo and Liu, Xuanzhe and Jin, Xin and Zhang, Hao},
  booktitle={18th USENIX symposium on operating systems design and implementation (OSDI 24)},
  pages={193--210},
  year={2024}
}

@inproceedings{patel2024splitwise,
  title={Splitwise: Efficient generative llm inference using phase splitting},
  author={Patel, Pratyush and Choukse, Esha and Zhang, Chaojie and Shah, Aashaka and Goiri, {\'I}{\~n}igo and Maleki, Saeed and Bianchini, Ricardo},
  booktitle={2024 ACM/IEEE 51st Annual International Symposium on Computer Architecture (ISCA)},
  pages={118--132},
  year={2024},
  organization={IEEE}
}

@inproceedings{agrawal2024taming,
  title={Taming $\{$Throughput-Latency$\}$ tradeoff in $\{$LLM$\}$ inference with $\{$Sarathi-Serve$\}$},
  author={Agrawal, Amey and Kedia, Nitin and Panwar, Ashish and Mohan, Jayashree and Kwatra, Nipun and Gulavani, Bhargav and Tumanov, Alexey and Ramjee, Ramachandran},
  booktitle={18th USENIX symposium on operating systems design and implementation (OSDI 24)},
  pages={117--134},
  year={2024}
}

@inproceedings{yao2021hawq,
  title={Hawq-v3: Dyadic neural network quantization},
  author={Yao, Zhewei and Dong, Zhen and Zheng, Zhangcheng and Gholami, Amir and Yu, Jiali and Tan, Eric and Wang, Leyuan and Huang, Qijing and Wang, Yida and Mahoney, Michael and others},
  booktitle={International conference on machine learning},
  pages={11875--11886},
  year={2021},
  organization={PMLR}
}

@article{ashkboos2024quarot,
  title={Quarot: Outlier-free 4-bit inference in rotated llms},
  author={Ashkboos, Saleh and Mohtashami, Amirkeivan and Croci, Maximilian L and Li, Bo and Cameron, Pashmina and Jaggi, Martin and Alistarh, Dan and Hoefler, Torsten and Hensman, James},
  journal={Advances in Neural Information Processing Systems},
  volume={37},
  pages={100213--100240},
  year={2024}
}

@article{zhao2024atom,
  title={Atom: Low-bit quantization for efficient and accurate llm serving},
  author={Zhao, Yilong and Lin, Chien-Yu and Zhu, Kan and Ye, Zihao and Chen, Lequn and Zheng, Size and Ceze, Luis and Krishnamurthy, Arvind and Chen, Tianqi and Kasikci, Baris},
  journal={Proceedings of Machine Learning and Systems},
  volume={6},
  pages={196--209},
  year={2024}
}

@article{lin2024awq,
  title={Awq: Activation-aware weight quantization for on-device llm compression and acceleration},
  author={Lin, Ji and Tang, Jiaming and Tang, Haotian and Yang, Shang and Chen, Wei-Ming and Wang, Wei-Chen and Xiao, Guangxuan and Dang, Xingyu and Gan, Chuang and Han, Song},
  journal={Proceedings of machine learning and systems},
  volume={6},
  pages={87--100},
  year={2024}
}

@article{wan2026attention,
  title={Attention Routing Stabilizes Early: Working-Set Inference for Recurrent Language Models},
  author={Wan, Ke and Chen, Chen},
  journal={arXiv preprint arXiv:2609.27373},
  year={2026}
}

@inproceedings{saunshi2025reasoning,
  title={Reasoning with latent thoughts: On the power of looped transformers},
  author={Saunshi, Nikunj and Dikkala, Nishanth and Li, Zhiyuan and Kumar, Sanjiv and J Reddi, Sashank},
  booktitle={International Conference on Learning Representations},
  volume={2025},
  pages={14855--14881},
  year={2025}
}

@article{jeddi2026loopformer,
  title={Loopformer: Elastic-depth looped transformers for latent reasoning via shortcut modulation},
  author={Jeddi, Ahmadreza and Ciccone, Marco and Taati, Babak},
  journal={arXiv preprint arXiv:2602.11451},
  year={2026}
}

@article{dehghani2018universal,
  title={Universal transformers},
  author={Dehghani, Mostafa and Gouws, Stephan and Vinyals, Oriol and Uszkoreit, Jakob and Kaiser, {\L}ukasz},
  journal={arXiv preprint arXiv:1807.03819},
  year={2018}
}

@article{yang2026stabilizing,
  title={Stabilizing recurrent dynamics for test-time scalable latent reasoning in looped language models},
  author={Yang, Xiao-Wen and Han, Ziyu and Zhang, Xi-Hua and Wei, Wen-Da and Shao, Jie-Jing and Guo, Lan-Zhe and Li, Yu-Feng},
  journal={arXiv preprint arXiv:2605.26733},
  year={2026}
}

@inproceedings{xiao2023smoothquant,
  title={Smoothquant: Accurate and efficient post-training quantization for large language models},
  author={Xiao, Guangxuan and Lin, Ji and Seznec, Mickael and Wu, Hao and Demouth, Julien and Han, Song},
  booktitle={International conference on machine learning},
  pages={38087--38099},
  year={2023},
  organization={PMLR}
}
\bibliographystyle{tmlr}

\newpage
\appendix
\appendix

\appendix

\section{Proofs for Quantization Error Analysis}
\label{app:error_analysis_proofs}

\subsection{Proof of Theorem~\ref{thm:error_propagation}}
\label{app:proof_error_propagation}

\begin{proof}
For a quantized recurrent step, $q_t=1$, and the deviation from the reference trajectory is
\begin{equation}
e_{t+1}
=
\left\|
\widetilde{F}_{\theta,t}
\!\left(
\widetilde{\boldsymbol{h}}^{(t)}
\right)
-
F_{\theta}
\!\left(
\boldsymbol{h}^{(t)}
\right)
\right\|.
\label{eq:proof_quantized_start}
\end{equation}
Adding and subtracting $F_{\theta}(\widetilde{\boldsymbol{h}}^{(t)})$ and applying the triangle inequality gives
\begin{equation}
\begin{aligned}
e_{t+1}
&\leq
\left\|
\widetilde{F}_{\theta,t}
\!\left(
\widetilde{\boldsymbol{h}}^{(t)}
\right)
-
F_{\theta}
\!\left(
\widetilde{\boldsymbol{h}}^{(t)}
\right)
\right\| \\
&\quad+
\left\|
F_{\theta}
\!\left(
\widetilde{\boldsymbol{h}}^{(t)}
\right)
-
F_{\theta}
\!\left(
\boldsymbol{h}^{(t)}
\right)
\right\|.
\end{aligned}
\label{eq:proof_triangle}
\end{equation}
Applying Equations~\ref{eq:uniform_quantization_error} and~\ref{eq:local_contraction} yields
\begin{equation}
e_{t+1}
\leq
\delta+\rho_t e_t.
\label{eq:proof_quantized_step}
\end{equation}
For a reference-precision step, $q_t=0$, both trajectories are propagated by $F_{\theta}$, and therefore
\begin{equation}
\begin{aligned}
e_{t+1}
&=
\left\|
F_{\theta}
\!\left(
\widetilde{\boldsymbol{h}}^{(t)}
\right)
-
F_{\theta}
\!\left(
\boldsymbol{h}^{(t)}
\right)
\right\| \\
&\leq
\rho_t e_t.
\end{aligned}
\label{eq:proof_reference_step}
\end{equation}
The two cases can therefore be written jointly as
\begin{equation}
e_{t+1}
\leq
\rho_t e_t
+
q_t\delta.
\label{eq:proof_joint_recurrence}
\end{equation}
Since the two trajectories share the same initial representation, $e_0=0$. Repeatedly applying Equation~\ref{eq:proof_joint_recurrence} gives
\begin{equation}
\begin{aligned}
e_T
&\leq
q_{T-1}\delta
+
\rho_{T-1}q_{T-2}\delta
+
\rho_{T-1}\rho_{T-2}q_{T-3}\delta
+
\cdots \\
&\quad+
q_0\delta
\prod_{j=1}^{T-1}
\rho_j.
\end{aligned}
\label{eq:proof_expanded_schedule}
\end{equation}
Collecting these terms yields
\begin{equation}
e_T(\boldsymbol{q})
\leq
\delta
\sum_{i=0}^{T-1}
q_i
\prod_{j=i+1}^{T-1}
\rho_j
=
E(\boldsymbol{q}),
\label{eq:proof_general_schedule_bound}
\end{equation}
which proves Theorem~\ref{thm:error_propagation}.
\end{proof}

\subsection{Proof of Theorem~\ref{thm:optimal_placement}}
\label{app:proof_optimal_placement}

\begin{proof}
Define the final-error weight associated with quantizing recurrent step $i$ as
\begin{equation}
w_i
=
\delta
\prod_{j=i+1}^{T-1}
\rho_j.
\label{eq:error_weight_definition}
\end{equation}
The bound in Equation~\ref{eq:general_schedule_bound} can then be written as
\begin{equation}
E(\boldsymbol{q})
=
\sum_{i=0}^{T-1}
q_i w_i.
\label{eq:weighted_schedule_bound}
\end{equation}
For two recurrent steps $i<k$, their weights satisfy
\begin{equation}
w_i
=
\left(
\prod_{j=i+1}^{k}
\rho_j
\right)
w_k.
\label{eq:weight_relation}
\end{equation}
Because $0\leq\rho_j<1$, we have
\begin{equation}
w_i
\leq
w_k
\qquad
\text{for all } i<k.
\label{eq:weight_monotonicity}
\end{equation}
Thus, the error weights are nondecreasing with recurrent depth: earlier quantized steps have no larger final-error weight than later quantized steps.

Consider any precision schedule $\boldsymbol{q}$ containing exactly $K$ quantized steps. Its error bound is the sum of exactly $K$ elements selected from the ordered sequence
\begin{equation}
w_0
\leq
w_1
\leq
\cdots
\leq
w_{T-1}.
\label{eq:ordered_error_weights}
\end{equation}
The smallest possible sum of $K$ such elements is obtained by selecting the first $K$ weights, corresponding to
\begin{equation}
q_t
=
1
\quad
\text{for }
t=0,\ldots,K-1,
\label{eq:minimum_selection}
\end{equation}
which is precisely $\boldsymbol{q}^{\mathrm{RAMP}}$. Therefore,
\begin{equation}
E\!\left(\boldsymbol{q}^{\mathrm{RAMP}}\right)
\leq
E(\boldsymbol{q})
\label{eq:ramp_minimum_bound}
\end{equation}
for every schedule with $K$ quantized steps.

Likewise, the largest possible sum of $K$ weights is obtained by selecting the final $K$ elements,
\begin{equation}
q_t
=
1
\quad
\text{for }
t=T-K,\ldots,T-1,
\label{eq:maximum_selection}
\end{equation}
which corresponds to $\boldsymbol{q}^{\mathrm{Rev}}$. Hence,
\begin{equation}
E(\boldsymbol{q})
\leq
E\!\left(\boldsymbol{q}^{\mathrm{Rev}}\right).
\label{eq:reverse_maximum_bound}
\end{equation}
Combining Equations~\ref{eq:ramp_minimum_bound} and~\ref{eq:reverse_maximum_bound} gives
\begin{equation}
E\!\left(\boldsymbol{q}^{\mathrm{RAMP}}\right)
\leq
E(\boldsymbol{q})
\leq
E\!\left(\boldsymbol{q}^{\mathrm{Rev}}\right),
\label{eq:proof_schedule_ordering}
\end{equation}
which proves Theorem~\ref{thm:optimal_placement}. If the relevant contraction factors satisfy $0<\rho_j<1$, the ordering of the corresponding weights is strict whenever an exchanged pair spans at least one strictly contractive recurrent step, so any non-prefix schedule has a strictly larger bound than the corresponding \texttt{RAMP} schedule unless the exchanged weights are equal.
\end{proof}

\newpage

\subsection{Full Long-Context Fidelity Results}
\label{app:context_scaling}

\paragraph{WikiText context scaling.}
Table~\ref{tab:wikitext_context_scaling} reports the complete long-context
fidelity results underlying the WikiText quality curve in
Figure~\ref{fig:ramp_context_scaling}. We use the same 32 frozen WikiText
endpoints at every context length, with nested contexts ending at the same
corpus location and identical final 64 prediction targets. All metrics are
computed under the same tail-64 evaluation protocol. Across context lengths
from 128 to 4096 tokens, \texttt{RAMP} remains consistently close to the
BF16-32 trajectory: paired $\Delta$NLL remains near zero, while hidden-state
and logit errors remain stable.

\begin{table}[t]
    \centering
    \caption{
    \textbf{WikiText long-context fidelity of \texttt{RAMP}.}
    Results use 32 frozen evaluation endpoints with nested contexts and
    identical tail-64 prediction targets across context lengths.
    $\Delta$NLL is measured relative to BF16-32.
    Hidden-state and logit relative $\ell_2$ errors are reported as percentages.
    }
    \label{tab:wikitext_context_scaling}
    \small
    \setlength{\tabcolsep}{7pt}
    \renewcommand{\arraystretch}{1.08}
    \begin{tabular}{@{}rccccc@{}}
        \toprule
        \textbf{Context}
        & \textbf{BF16 NLL}
        & \textbf{\texttt{RAMP} NLL}
        & $\Delta$\textbf{NLL}
        & \textbf{Hidden} $\ell_2$
        & \textbf{Logit} $\ell_2$ \\
        \midrule

        128
        & 2.603711
        & 2.605315
        & +0.001604
        & 2.937\%
        & 1.429\% \\

        256
        & 2.345469
        & 2.347890
        & +0.002422
        & 2.881\%
        & 1.647\% \\

        512
        & 2.216275
        & 2.217365
        & +0.001090
        & 2.699\%
        & 1.378\% \\

        1024
        & 2.153023
        & 2.153203
        & +0.000179
        & 2.556\%
        & 1.277\% \\

        2048
        & 2.116153
        & 2.115046
        & $-0.001107$
        & 2.586\%
        & 1.294\% \\

        4096
        & 2.101984
        & 2.102086
        & +0.000102
        & 2.653\%
        & 1.342\% \\

        \bottomrule
    \end{tabular}
\end{table}

\paragraph{PG19 context scaling.}
We repeat the same long-context fidelity analysis on 32 frozen PG19 validation
books. Contexts are constructed entirely within individual documents, and the
final 64 prediction targets remain identical for each book as context length
changes. As shown in Table~\ref{tab:pg19_context_scaling}, \texttt{RAMP} again
remains close to BF16-32 throughout the evaluated range. Hidden-state and
logit errors remain stable from 512 to 4096 tokens, while paired $\Delta$NLL
stays small. PG19 exhibits a mild positive trend in $\Delta$NLL toward longer
contexts; the negative values at shorter contexts should therefore be
interpreted as small numerical deviations rather than evidence that
\texttt{RAMP} improves over BF16.

\begin{table}[t]
    \centering
    \caption{
    \textbf{PG19 long-context fidelity of \texttt{RAMP}.}
    Results use 32 frozen validation books with document-contained nested
    contexts and identical tail-64 prediction targets across context lengths.
    $\Delta$NLL is measured relative to BF16-32.
    Hidden-state and logit relative $\ell_2$ errors are reported as percentages.
    }
    \label{tab:pg19_context_scaling}
    \small
    \setlength{\tabcolsep}{7pt}
    \renewcommand{\arraystretch}{1.08}
    \begin{tabular}{@{}rccccc@{}}
        \toprule
        \textbf{Context}
        & \textbf{BF16 NLL}
        & \textbf{\texttt{RAMP} NLL}
        & $\Delta$\textbf{NLL}
        & \textbf{Hidden} $\ell_2$
        & \textbf{Logit} $\ell_2$ \\
        \midrule

        512
        & 2.671946
        & 2.669488
        & $-0.002457$
        & 2.722\%
        & 1.201\% \\

        1024
        & 2.623134
        & 2.621616
        & $-0.001518$
        & 2.825\%
        & 1.325\% \\

        2048
        & 2.607391
        & 2.607932
        & +0.000542
        & 2.792\%
        & 1.234\% \\

        4096
        & 2.592401
        & 2.592470
        & +0.000068
        & 3.075\%
        & 1.354\% \\

        \bottomrule
    \end{tabular}
\end{table}

\paragraph{Cross-dataset long-context behavior.}
The two datasets exhibit a consistent long-context fidelity pattern.
On WikiText, paired $\Delta$NLL remains near zero throughout
128--4096 tokens, while hidden-state and logit errors remain approximately
constant. PG19 shows the same stability in representation- and output-level
error over 512--4096 tokens, with paired $\Delta$NLL remaining small despite
a mild positive trend toward longer contexts. Together with the 4K
cross-dataset comparison in Table~\ref{tab:quality_4k}, these results show
that terminal BF16 refinement preserves near-full-precision recurrent
trajectories across both datasets and remains effective as context length
increases.

\end{document}